\newif\iffulledition % create a new Boolean variable.
\newtheorem{theorem}{Theorem}
\newtheorem{corollary}{Corollary}
\newcolumntype{Y}{>{\centering\arraybackslash}X}
\newcolumntype{M}[1]{>{\centering\arraybackslash}m{#1}}
\pgfplotsset{compat=newest}
\begin{document}

\title{Proof of Swarm Based Ensemble Learning for Federated Learning Applications}
\iffulledition
\titlenote{This is the full edition of a 4-page poster paper published at the Proceedings of the 38th ACM/SIGAPP Symposium on Applied Computing (SAC '23), which can be accessed via the following DOI link: \url{https://doi.org/10.1145/3555776.3578601}.}
\titlenote{Please cite this paper as follows: Ali Raza, Kim Phuc Tran, Ludovic Koehl and Shujun Li, ``Proof of Swarm Based Ensemble Learning for Federated Learning Applications,'' arXiv:2212.xxxxx [cs.LG], 2022, \url{https://doi.org/10.48550/arXiv.2212.xxxxxx}, a shorter edition was published in Proceedings of the 38th ACM/SIGAPP Symposium on Applied Computing (SAC '23), ACM, 2023, \url{https://doi.org/10.1145/3555776.3578601}.}
\else
\titlenote{The full edition of this short poster paper can be found on arXiv.org as a preprint at \url{https://doi.org/10.48550/arXiv.2212.xxxxxx}.}
\fi

\author{Ali Raza}
\authornote{Corresponding co-authors: Ali Raza and Shujun Li.}
\email{ar718@kent.ac.uk}
\orcid{0000-0001-8326-8325}
\affiliation{
  \institution{Univ.~Lille, ENSAIT, ULR 2461 -- GEMTEX -- Génie et Matériaux Textiles, France}
  %\city{Lille}
  %\country{France}
}
\affiliation{%
  \institution{Institute of Cyber Security for Society (iCSS) \& School of Computing, University of Kent, UK}
  %\city{Canterbury}
  %\country{UK}
}

\author{Kim Phuc Tran}
\email{Kim-phuc.tran@ensait.fr}
\orcid{0000-0002-6005-1497}
\author{Ludovic Koehl}
\email{ludovic.koehl@ensait.fr}
\orcid{0000-0002-3404-8462}
\affiliation{
  \institution{Univ.~Lille, ENSAIT, ULR 2461 -- GEMTEX -- Génie et Matériaux Textiles, France}
  %\city{Lille}
  %\country{France}
}

\author{Shujun Li}
\iffulledition 
\authornotemark[3]
\else
\authornotemark[2]
\fi
\email{S.J.Li@kent.ac.uk}
\orcid{0000-0001-5628-7328}
\affiliation{%
  \institution{Institute of Cyber Security for Society (iCSS) \& School of Computing, University of Kent, UK}
  %\city{Canterbury}
  %\country{UK}
}

\begin{abstract}
Ensemble learning combines results from multiple machine learning models in order to provide a better and optimised predictive model with reduced bias, variance and improved predictions. However, in federated learning it is not feasible to apply centralised ensemble learning directly due to privacy concerns. Hence, a mechanism is required to combine results of local models to produce a global model. Most distributed consensus algorithms, such as Byzantine fault tolerance (BFT), do not normally perform well in such applications. This is because, in such methods predictions of some of the peers are disregarded, so a majority of peers can win without even considering other peers' decisions. Additionally, the confidence score of the result of each peer is not normally taken into account, although it is an important feature to consider for ensemble learning. Moreover, the problem of a tie event is often left un-addressed by methods such as BFT. To fill these research gaps, we propose PoSw (Proof of Swarm), a novel distributed consensus algorithm for ensemble learning in a federated setting, which was inspired by particle swarm based algorithms for solving optimisation problems. The proposed algorithm is theoretically proved to always converge in a relatively small number of steps and has mechanisms to resolve tie events while trying to achieve sub-optimum solutions. We experimentally validated the performance of the proposed algorithm using ECG classification as an example application in healthcare, showing that the ensemble learning model outperformed all local models and even the FL-based global model. \iffulledition To the best of our knowledge, the proposed algorithm is the first attempt to make consensus over the output results of distributed models trained using federated learning.\fi
\end{abstract}

%%
%% The code below is generated by the tool at http://dl.acm.org/ccs.cfm.
%% Please copy and paste the code instead of the example below.
%%
\begin{CCSXML}
<ccs2012>
<concept>
<concept_id>10010147.10010178.10010219.10010221</concept_id>
<concept_desc>Computing methodologies~Intelligent agents</concept_desc>
<concept_significance>500</concept_significance>
</concept>
</ccs2012>
\end{CCSXML}

\ccsdesc[500]{Computing methodologies~Intelligent agents}

\keywords{Privacy, federated learning, ensemble, consensus protocol, evolutionary computing, swarm algorithms, healthcare.}

\maketitle

\section{Introduction}

Machine learning (ML) can improve digital healthcare by providing efficient and accurate solution to different problems~\cite{lee2021application\iffulledition, lysaght2019ai\fi}. Federate learning (FL) has been applied in many healthcare application to solve the issues of centralised machine learning, where a joint machine learning model is trained by distributed peers. This models is then downloaded and personalised by local devices~\cite{xu2021federated\iffulledition , yang2019federated\fi}. FL helps enhance privacy of data owners. Furthermore using FL, distributed data can be used to train robust ML models \iffulledition. Different organisations train robust models for different healthcare applications, for example, Electrocardiogram (ECG) classification, cancer tumour detection etc. Such models are then available as ML-as-a-service. This allow clients to access ML models only via a prediction query interface, which provides predictions (e.g., classification) \else and used as ML-as-a-service\fi. However, in such applications, results from a single model cannot be trusted completely because of potential negative consequences of false positives and false negatives. One solution for this problem is to query different models for cross-validation before any results are accepted. However, different models can provide different prediction results and confidence scores for a given input sample. Choosing the right results (prediction) among many can be difficult and challenging. In other words, FL enables collaborative training of joint model but cannot perform \emph{consensus} over the distributed predictions once the global training has been completed and deployed at local devices. The local devices generally personalise the distributively trained model using their local data. 

Methods like Byzantine fault tolerance can address such issues in distributed computing, nevertheless, such methods work based on simple majority voting~\cite{castro2002}, without considering confidence score for results. Confidence scores of results play an important roles, which can be explained by an illustrative example: $n$ peers all with high confidence on a result are clearly better than $n$ peers all with less confidence on the same result. Hence, it is useful for consensus algorithms to consider confidence scores of all participating peers in order to produce more confident consensus results.
 
To the best of our knowledge, there is only limited related work, which tries to achieve consensus during the training phase of multiple machine learning models~\cite{savazzi2020federated\iffulledition, li2020blockchain\fi}. Such methods cannot be applied to scenarios where multiple pre-trained machine learning models work together to achieve a consensus for unseen data, which remains an open research question.

Swarm intelligence, a natural phenomenon in many organisms, has been used to get the (sub-)optimal choice among groups, schools and colonies. \iffulledition For instance, it has been used in robotics to choose the optimal path~\cite{metcalf2019keeping, sonti2021artificial}. \fi Artificial swarm intelligence of distributed models can often achieve superior results over individual models who participate~\cite{chakraborty2017swarm}.
 
In this paper, inspired by swarm intelligence, we propose a novel consensus algorithm called Proof of Swarm (PoSw). The proposed algorithm can be used to obtain (sub-)optimal consensus among all the peers by effectively considering output probability distributions (confidence scores) over all the candidate outputs to obtain an agreement (consensus) among all the peers over the output results. The proposed algorithm does not involves complex computation, so it can be used in resource constraint edge device(s).

\iffulledition
The main contributions of our work are summarised as follow.
\begin{enumerate}
\item We propose a novel lightweight consensus algorithm to achieve a(n) (sub-)optimum consensus among the peer classifiers in a federated learning.

\item We rigorously prove that the proposed algorithm can always converge in a limited number of steps.

\item The proposed algorithm is computationally efficient and hence can be used in resource-constraint devices.

\item We provide experimental results to validate the performance of the proposed algorithm using ECG classification as an example application in healthcare.

\item Due to the distributed nature of FL, the proposed algorithm provides enhanced security against some attacks, e.g., poisoning attacks.
\end{enumerate}
\fi

The rest of the paper is organised as follow. \iffulledition Section~\ref{sec:background} presents background and related work. \fi Section~\ref{sec:PoSw} presents the proposed PoSw consensus method. \iffulledition Section~\ref{sec:cases} presents some case studies using the proposed PoSw consensus method. \fi Section~\ref{sec:experiments} shows experimental results. \iffulledition Some further discussions on the proposed PoSw method are given in Section~\ref{sec:discussions}, including some additional data security and privacy challenges and future research opportunities. \fi The last section concludes the paper.

\iffulledition
\section{Background and Related Work}
\label{sec:background}

In this section we present background and related work.

\subsection{Federated Learning}

Federated learning (FL)~\cite{mcmahan2016federated} collaboratively trains a joint model to achieve robustness, and privacy. In FL the edge devices train a local model using their local data and share the trained parameters with a central server which aggregates the share parameters according to a given aggregation algorithm~\cite{mcmahan2016federated, blanchard2017} to created parameters of a global model. The parameter of the global model are then downloaded to be utilised locally by each edge device. This process repeats with emerging data until a desired level of performance is achieved. FL enhances the privacy of data owners because each edge device do not share its raw data directly with other edge devices in the network.

\subsection{Byzantine Fault Tolerance}
\label{subsec:BFT}

Byzantine fault tolerance (BFT)~\cite{castro2002} is one of the most popular consensus methods used in distributed systems. To achieve consensus using BFT, in a distributed system at least a majority of $\frac{2}{3}$ of all peers should agree on a given decision. However, it cannot achieve a consensus if the majority voting is not achieved or in case of a tie, i.e., 50\% agree and 50\% do not agree on a given decision.

\subsection{Swarm Intelligence}
\label{subsec:si_methods}

Swam intelligence (SI) is being used to solve many optimisation problems. SI works using collective intelligence of groups of agents, such as group of artificial intelligence based decentralised systems~\cite{bonabeau1999}. Agents of a group in SI interact with each other by sharing information among each other in regards to a particular task, followed by execution of various simple tasks by each individual agent. This allows the group of agents (swarm) to solve complex problems with mutual consensus~\cite{bonabeau1999}. Due to its promising results, SI has been used in many applications such as medical dataset classification, moving objects tracking communications, and predictions~\cite{zhang2013}.

A number of researchers have proposed swarm optimisation based collective decision making models~\cite{valentini2017best, valentini2015, hamann2013, grishchenko2021}. For example, Hamann et al.~\cite{hamann2013} proposed an abstract model for collective decision making inspired by urn models. To break a tie, they suggested relying on noise because a real swarm will be noisy. Similarly, Grishchenko et al.~\cite{grishchenko2021} described how gnomes develop their own non-Byzantine leaderless consensus algorithms based on simple rules (e.g., one genome proposes a plans, which then spreads in the whole network using gossips). They also explore Byzantine-ready version of the algorithm where ties are addressed using the rank of the genome that proposed the plan. Nevertheless, such methods are not directly applicable in our application area, which lacks features such as a noisy swarm and rankings of clients/edges. Hence, significant modifications are need to be made in order to adopt such algorithms.

\subsection{Blockchain-based Consensus}
\label{subsec:bc_methods}
Consensus in blockchain involves the agreement of peers in the network about current state of data in the network. Though numerous consensus methods are being used to achieve consensus among the peers in blockchain, the most widely used are Proof of Work (PoW) and Proof of Stake (PoS)~\cite{Bach2018}. PoW works by searching for a value that, when hashed gives a hash with a predefined number of zeros in the prefix of hash (usually accomplished by adding a nonce value). PoS uses stack (wealth, reputation etc.) of peers for validation. Peers with higher stack have higher changes to get selected to validate updates. Blockchain-based consensus have been proposed to address security issues in FL. For example, Mengfan and Xinghua~\cite{federated2022fedbc} proposed FedBC, a gradient similarity based secure consensus algorithm to address byzantine attacks in FL. Nevertheless, blockchain-based consensus algorithms are usually computationally expensive and are not easily scalable. Similar to the other SI methods, due to lack of features such as stack, and high computational power blockchain-based consensus algorithm have limited applicability in our application area, and requires significant amount of modification before using them in such applications.      
\fi

\section{Proposed Method}
\label{sec:PoSw}

We will use an indicative example to explain the proposed PoSw consensus method. Let us suppose that five edge devices have trained a global model for classification of ECG signals into five classes, S, V, F, N, and Q, using federated learning. After receiving the globally trained model, each edge device personalises the global model using its local data. Now, an input sample is given to each of the edge device. Each edge device will output a classification result (confidence score over candidate classes). Here, the output is actually the probability distribution given by the softmax function of the trained model for the given input at each edge device $E_i$ ($i=1, 2, 3, 4, 5$). Suppose that edge devices 1 and 2 predicted class N, while edge devices 3, 4 and 5 predicted class V, Q and F, respectively. This is because the model considers the class with the highest probability as the predicted class. Since not all of the peers have the same output class for the same input, trusting any particular result is not feasible. Therefore, to achieve consensus in this situation, our proposed PoSw method considers the confidence scores of all edge devices' results so that results with higher confidence can be prioritised. Assuming there are $n$ edge devices, the general workflow of the proposed PoSw method can be described as follow. \iffulledition In Section~\ref{sec:cases}, we will discuss some case studies to illustrate more how the proposed PoSw method works.\fi

\begin{enumerate}
\item Each edge device $E_i$ broadcasts $(C_i,p_i)$ to the whole network, where $C_i$ is the local ``best'' class label with the maximum probability $p_i$. If more than one class label has the same maximum probability, randomly choose one.

\item For each unique class label $c\in\{C_i\}_{i=1}^n$, count the number of votes it receives among all edge devices and denoted it by $\#(c)$. Denote the maximum number of votes by $M=\max\{\#(C_i)\}_{i=1}^n$. Now each edge device calculates a set of global ``best'' class labels $\mathcal{C}$ as follows:
    \begin{enumerate}
    \item If there is a single class label $c$ with the maximum number of votes $M$, $\mathcal{C}=\{c\}$.
    
    \item If there are more than one class label with the maximum number of votes $M$, then calculate the sum of the probabilities of each such class label $c$ according to Eq.~\eqref{eq:Psum}. If a single class label $c$ has the maximum sum, then $\mathcal{C}=\{c\}$.

    \begin{equation}\label{eq:Psum}
    P(c) = \sum\nolimits_{\forall i, C_i=c} p_i.
    \end{equation}
    
    \item If more than one class label with the maximum probability sum, then set $\mathcal{C}$ to be the set of all such labels.
    \end{enumerate}

\item Each edge device satisfying $C_i \not\in \mathcal{C}$ performs the \emph{move} function, by assigning $C_i$ to be the next class label $C_i'$ with the next highest probability $p_i'$, and then re-broadcasting $(C_i',p_i')$. If an edge device exhausts all class labels, it goes back to the class label with the highest probability (i.e., ``resets'' the whole process).

\item Repeat the above two steps until the status of the whole network converges, e.g., $\forall i$, $C_i\in\mathcal{C}$.
\end{enumerate}

For the proposed PoSw algorithm, we can prove the following important theorem.

\begin{theorem}
\label{theorem:convergence}
Assuming there are $N>1$ edge devices and $K>1$ class labels, the above-described PoSw algorithm will converge to reach a consensus after at most $K(K-1)$ rounds.
\end{theorem}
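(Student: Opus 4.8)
The plan is to pass to an abstract description of the dynamics and then exhibit a bounded progress measure. First I would fix, once and for all, for each edge device $E_i$ its preference list: the permutation of the $K$ labels in non-increasing order of the probabilities the local model assigns, with the initial (and every subsequent) tie broken in some fixed deterministic way. The \emph{move} operation then simply advances a pointer $r_i\in\{0,\dots,K-1\}$ by one, cyclically, a ``reset'' being the wrap-around from $K-1$ back to $0$; the announced pair $(C_i,p_i)$ is a deterministic function of $r_i$. A round consists of forming the global best set $\mathcal{C}$ from the current pointers and advancing $r_i$ for exactly those $i$ with $C_i\notin\mathcal{C}$, and the process halts precisely when no pointer advances, i.e.\ when $\{C_i\}\subseteq\mathcal{C}$. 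Since the state space $\{0,\dots,K-1\}^N$ is finite and the update is deterministic, it suffices to rule out non-trivial cycles and bound the transient.

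The technical heart is a monotonicity (``mass cannot leak'') lemma: a device whose current label lies in $\mathcal{C}$ does not move, so for every label $c$ the vote count $\#(c)$ is non-decreasing from one round to the next \emph{as long as} $c\in\mathcal{C}$; together with the pigeonhole bound $M\ge N/K$ on the plurality count, this rules out cycles. I would then establish two clean sub-claims: (i) once the global best set has settled to a single label $c$ that remains the unique element of $\mathcal{C}$, the algorithm converges within at most $K-1$ further rounds, because every still-moving device cycles its pointer onto $c$ within $K-1$ steps and the instant it lands on $c$ it is in $\mathcal{C}$ and is locked there forever; and (ii) the identity of the plurality leader can change only a controlled number of times (of order $K$), since when a leader $c$ is dethroned its supporters are exactly the devices that did not move during $c$'s reign, which forces the ``coronation count'' of each successive leader to strictly increase while the set of labels still able to ever lead shrinks. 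Chaining (i) and (ii), and folding the tie cases $|\mathcal{C}|>1$ into (i) — a persistent tie in both vote count and probability mass is itself a non-advancing, hence terminal, configuration, while a transient tie only accelerates matters — yields the bound $K(K-1)$.

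The step I expect to be the main obstacle is sub-claim (ii): making the count of ``leader changes'' airtight and pinning it to $K$ rather than to $N$. The delicate point is that when the current leader has fewer than $N/2$ votes the remaining devices can, over a few rounds, pile onto a rival label and dethrone it, and one must rule out a dethroned label climbing back to the lead — or at least bound how often this can happen — using the vote-accumulation bookkeeping (the new leader's count is squeezed between $N/K$ and $N$ minus the old leader's count, and a dethroned leader immediately starts shedding votes). Absorbing the pointer \emph{resets} into this accounting is the remaining detail: a device that has just reset has, by construction, only recently rejected a full cycle of labels as insufficiently popular, so resets can be charged against progress already made rather than against fresh rounds.
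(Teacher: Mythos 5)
Your proposal is a sketch whose technical core is missing, and you say so yourself: sub-claim (ii), the bound on how often the plurality leader can change, is exactly the part that is never established, and it is where the real difficulty of the theorem lives. The only airtight ingredients you give are the monotonicity observation (a label's vote count cannot drop while it stays in $\mathcal{C}$, since its supporters do not move) and the pigeonhole bound $M\ge N/K$; neither rules out the problematic regime in which the vote counts of all labels stay flat and the global best set keeps changing purely through the probability-sum tie-break of Eq.~\eqref{eq:Psum}. In that regime your ``coronation count strictly increases'' mechanism fails outright, because a leader is dethroned at \emph{equal} vote count, and your fallback remark that a persistent tie is ``non-advancing, hence terminal'' is also not right: devices whose labels lie outside the tied set $\mathcal{C}$ keep moving, so the configuration does advance. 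This flat-vote-count regime is precisely the paper's Scenario 2b, and the paper resolves it with an argument absent from your proposal: while trapped there, the global best can change only if its probability sum strictly increases, and since the $p_i$ are static the successive sets $\mathcal{C}_1,\mathcal{C}_2,\ldots$ must be pairwise distinct; with only $K$ labels the last such set would then persist forever, but within at most $K-1$ moves every non-conforming device exhausts all other labels and must land on a current global-best label, raising its vote count --- a contradiction that caps the trapped phase at $K(K-1)$ rounds.

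The outer skeleton you propose (fixed preference lists, cyclic pointers, a progress measure, and your sub-claim (i) that a stable unique leader absorbs everyone within $K-1$ rounds) is compatible with the paper's proof, which likewise tracks the number $n_i$ of devices voting for the global best and shows it must increase within a bounded number of rounds. But without a proof of (ii) --- or of the paper's static-probability / distinct-sets argument that substitutes for it --- the proposal does not yet prove convergence at all, let alone the $K(K-1)$ bound. To repair it along your own lines you would need to handle leader changes at equal vote counts separately, and the natural way to do that is essentially the paper's Scenario 2b analysis.
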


\begin{proof}
For the $i$-th round of the algorithm, denote the set of the global best labels by $\mathcal{C}_i$, and assume that $n_i$ edge devices that vote for one of the labels in $\mathcal{C}_i$. If $n_i=N$, the algorithm reaches the end so can stop. Therefore, we now only consider the case of $n_i<N$. In the following, we show for all possible cases, after a finite number of steps, $n_i$ will increase by at least one, i.e., $n_{i+j}\geq n_i+1$, where $j$ is a finite number.

According to the proposed PoSw algorithm, only the $N-n_i$ edge devices that did not vote for any class labels in $\mathcal{C}_i$ should perform the \emph{move} function. Assume after the moves, the new class labels of $N-n_i$ edge devices choose are $C_1, \ldots, C_{N-n_i}$. Consider two different scenarios.

Scenario 1) $\exists c\in\mathcal{C}_i$, which appears at least once in $C_1, \ldots, C_{N-n_i}$: In this case, $n_{i+1}\geq n_i+1$ will always hold since no matter which class label(s) ($\mathcal{C}_i$ or one or more in $C_1', \ldots, C_{N-n_i}'$) is/are selected, the number of votes will be no less than $n_i+1$, the minimum number of votes $c$ gets in the new round.

Scenario 2) $\forall c\in\mathcal{C}_i$, $c$ does not appear in $C_1, \ldots, C_{N-n_i}$: In this case, the number of votes of each global best class label in $\mathcal{C}_i$ remains unchanged. Now let us consider two sub-scenarios.

Scenario 2a) If one or more of $C_1, \ldots, C_{N-n_i}$ get more votes than $n_i$, then $\mathcal{C}_{i+1}$ will change to the set of those new class label(s), and $n_{i+1}\geq n_i+1$ after just one round.

Scenario 2b) If none of $C_1, \ldots, C_{N-n_i}$ get more votes than $n_i$, let us consider all future rounds of the algorithm. If for any round $j>i$, Scenario 1 or 2a happens then $n_j\geq n_i+1$ will hold, therefore, the only possibility for a consensus to not take place will be when the algorithm is ``trapped'' within Scenario 2b forever. Now let us assume that the algorithm is indeed trapped in Scenario 2b forever. In this case, the global best class labels appear in all future rounds as follows:
\[
\overbrace{\mathcal{C}_1, \ldots, \mathcal{C}_1}^{i_1+1\text{ to }i_1+f_1\text{ rounds}},
\ldots,
\overbrace{\mathcal{C}_K, \ldots, \mathcal{C}_K}^{i_{K-1}+1\text{ to }i_{K-1}+f_K\text{ rounds}},
\ldots
\]
Assume $\exists k\in\{1,\ldots,K\}$ so that $f_k\geq K-1$. Then, all the edge devices that did not vote for any class label in $\mathcal{C}_k$ in the $i_k+1$-th round would have exhausted all the remaining $K-1$ candidate class labels, which must include at least one label $c\in\mathcal{C}_k$. If so, the number of votes $c$ gets should have increased by at least one before reaching the $i_k+f_k$-th round. This means that since the $i_1$-th round the number of votes of the global best must have increase at least by one in at most $K(K-1)$ rounds. On the other hands, if the $\forall k\in\{1,\ldots,K\}$ so that $f_k<K-1$, let us prove that $\forall i>j$, $\mathcal{C}_i\cap \mathcal{C}_j=\varnothing$. The nature of being trapped in Scenario 2b is that the global best class label(s) can only change if $P(\mathcal{C}_j)>P(\mathcal{C}_i)$ since the number of votes remains $n_i$. This means that $P(\mathcal{C}_K)>\cdots>P(\mathcal{C}_1)$. Since the probability of any class label is static, the inequality implies $\forall i,j \in \{1, \ldots, K\}$, $\mathcal{C}_i\neq\mathcal{C}_j$. Given there are only $K$ class labels, we have $\cup_{i=1}^K\mathcal{C}_i = \{1, \ldots, K\}$. Now, after $\mathcal{C}_K$, the output of the algorithm will not change since none of the class labels in $\{1, \ldots, K\} - \mathcal{C}_K$ will have a higher probability than $P(\mathcal{C}_K)$. Therefore, $\mathcal{C}_K$ will be the output forever, i.e., $f_K=\infty$, which contradicts to the previous assumption that $f_K<K-1$. Therefore, the algorithm cannot be trapped forever in Scenario 2b and will go to other scenarios in at most $K(K-1)$ rounds, at which point the number of votes will increase by at least one.

Combining all the above scenarios together, we can see the maximum rounds needed to let the number of votes the global best class label(s) to increase by at least one is $\max(1,K(K-1))$. Since $K>1$, we can get $K(K-1)\geq 2$ so the maximum number of rounds is $K(K-1)$.
\end{proof}

A corollary from Theorem~\ref{theorem:convergence} is the following.

\begin{corollary}
\label{corollary:early_stop}
Once the PoSw algorithm produces an output $\mathcal{C}$ with $\lfloor K/2\rfloor+1$ (a simple majority of all votes), $\mathcal{C}$ will be the final converged solution so the algorithm can stop.
\end{corollary}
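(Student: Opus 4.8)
The plan is to treat a strict majority as a fixed point that PoSw can never escape, so that the reported $\mathcal{C}$ is already final the instant a majority appears. First I would fix the round at which the algorithm outputs a set $\mathcal{C}$ in which some label $c$ collects a strict majority of the $N$ votes; since no other label can then match $c$'s vote count, this already forces $\mathcal{C}=\{c\}$. Next I would observe that every device currently voting for $c$ satisfies $C_i=c\in\mathcal{C}$, so by Step~3 of PoSw none of those devices performs the \emph{move} function in any later round. Hence $\#(c)$ is non-decreasing from this round onwards and remains a strict majority forever.

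The second step is a short counting argument. In any later round, at most $N-\#(c)$ devices vote for labels other than $c$, and $N-\#(c)<\#(c)$ because $\#(c)>N/2$; therefore $c$ is, in every subsequent round, the \emph{unique} label attaining the maximum vote count $M$. Consequently Step~2(a) always outputs $\mathcal{C}=\{c\}$, and the tie-breaking branches 2(b)--2(c) (and the probability sums $P(\cdot)$) are never invoked again. So the output is frozen at $\{c\}$, which is exactly the early-stopping claim.

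Finally I would connect this to the convergence notion used in Theorem~\ref{theorem:convergence}: that theorem guarantees the run eventually reaches the state in which all $N$ devices vote for $c$ (each remaining device keeps cycling through its label list, is eventually ``caught'' at $c$, and — now lying in $\mathcal{C}$ — stops moving), so $\{c\}$ genuinely is the converged consensus; the corollary merely says this outcome is already decided and the remaining rounds cannot alter it. I do not expect a real obstacle here; the one point to state carefully is the monotonicity of $\#(c)$, i.e.\ that no combination of \emph{move}s by the minority can let another label tie or overtake $c$, but that is immediate from the bound $N-\#(c)<\#(c)$.
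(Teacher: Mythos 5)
Your argument is correct, and it is worth noting that the paper itself states this corollary with no proof at all --- it is simply asserted as a consequence of Theorem~\ref{theorem:convergence}. In fact, the theorem's statement alone (a bound on the number of rounds to convergence) does not identify \emph{which} label the process converges to, so the lock-in argument you give is exactly the missing ingredient: once a label $c$ holds a strict majority, its supporters satisfy $C_i\in\mathcal{C}$ and never execute the \emph{move} function, so $\#(c)$ is non-decreasing, no other label can reach $M$, Step~2(a) keeps returning $\mathcal{C}=\{c\}$, and the tie-breaking branches are never re-entered; moreover each minority device, moving once per round through its probability-ordered label list, hits $c$ within at most $K-1$ further rounds and is absorbed, so the full-consensus limit is indeed $\{c\}$. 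Two small remarks. First, your steps two and three are really a single induction on rounds (the claim that $c$'s supporters never move presupposes that $\mathcal{C}$ is still $\{c\}$ in each later round, which is what the counting bound $N-\#(c)<\#(c)$ re-establishes); stating it as an explicit induction would tidy the logic, but the content is there. Second, you silently read ``a simple majority of all votes'' as more than $N/2$ of the $N$ device votes, whereas the corollary writes the threshold as $\lfloor K/2\rfloor+1$ in terms of the number of classes $K$; these coincide only when $K=N$ (as in the paper's experiments with five devices and five classes), and your reading is the one under which the statement is actually true, so your proof in effect also corrects the threshold as written.
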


\iffulledition
\section{Case studies}
\label{sec:cases}

In this section, we provide two case studies to illustrate how the proposed PoSw method works.

\subsection{Case Study 1: When there is no tie}

Figure~\ref{fig:case_study_1_PD} shows the probability distribution of a sample input. According to the proposed PoSw method, as shown in Figure~\ref{fig:case_study_1_algo}, each edge will broadcast its predicted class label with the maximum probability, and will set it as its $C_i$ in Round~1. From Figure~\ref{fig:case_study_1_algo}, it can be seen that the local best class labels for Edges~1 and 2 are both N, while for Edges~3, 4, and 5 they V, Q and F, respectively. Since the class label with the maximum number of votes is N (it has two votes and others have just one), $\mathcal{C}$ is set to be N. Now in Round~2, Edges~3, 4,and 5 will perform the \emph{move} function and update their $C_i$ because their $C_i \not\in \mathcal{C}$. Hence in Round~2, Edges~4, and 5 will update their $C_i$ to N because it is the class label with the second highest probability. For Edge~3, it will update $C_3$ to Q, as it has the second highest probability. After updating $C_i$, each edge will broadcast the new $C_i$ to all peers to determine the new $\mathcal{C}$, which is still N (now with four votes). After Round~2, we already have $\mathcal{C}$ with a simple majority of all votes, so according to Corollary~\ref{corollary:early_stop} we can change all local bests to N and stop.

\begin{figure}[!ht]
\centering
\begin{subfigure}{\linewidth}
  \centering
  % include first image
  \includegraphics[width=\linewidth]{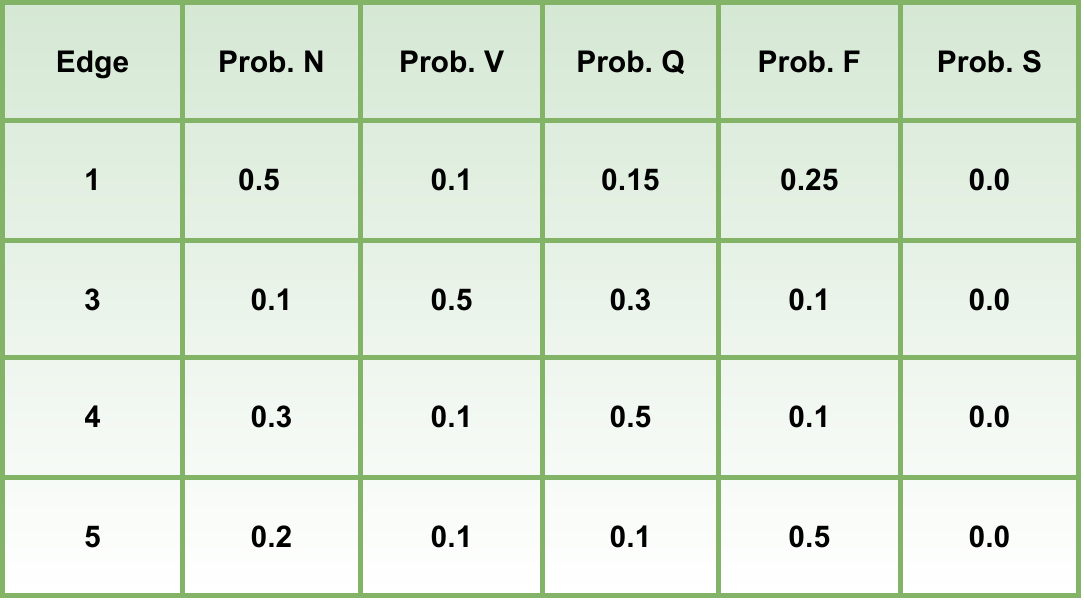}  
  \caption{Input probability distribution}
  \label{fig:case_study_1_PD}
\end{subfigure}
\begin{subfigure}{\linewidth}
  \centering
  % include second image
  \includegraphics[width=\linewidth]{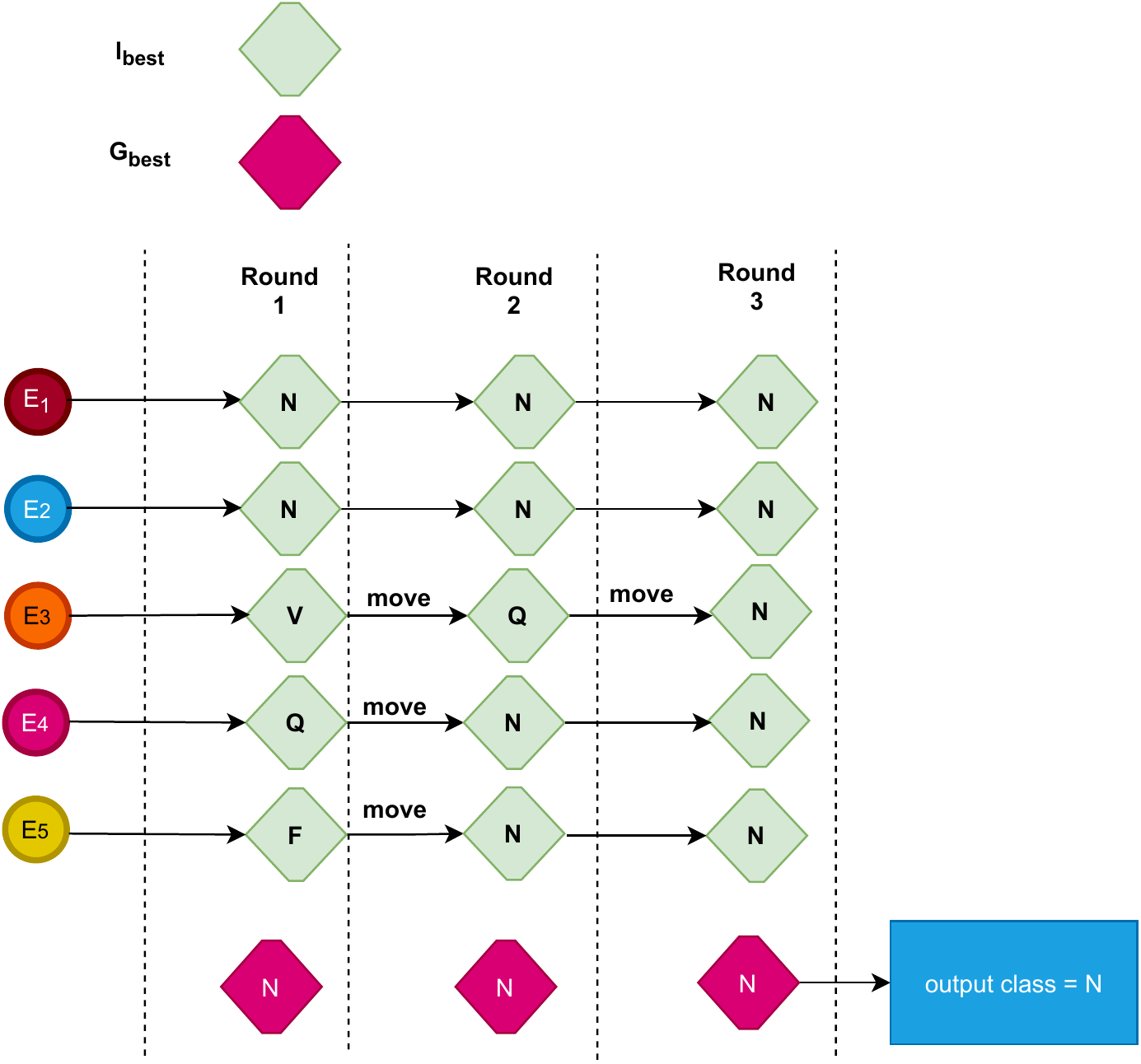}  
  \caption{Proof of Swarm}
  \label{fig:case_study_1_algo}
\end{subfigure}
\caption{Case Study 1.}
%\label{fig:case_study_1}
\end{figure}

\subsection{Case Study 2: When there is a tie}

Figure~\ref{fig:case_study_2_PD} shows the probability distribution of another sample input. According to the proposed PoSw method, as shown in Figure~\ref{fig:case_study_2_algo}, each edge will broadcast its class label with the maximum probability, and will set it as its $C_i$ in Round~1. From Figure~\ref{fig:case_study_2_algo}, it can be seen that the local best class labels for Edges~1 and 2 are both N, while for Edges~3, 4, and 5 they are V, Q and S, respectively. Since the class label with the maximum number of count is class N (it has a count of 2), N will be set as $\mathcal{C}$. Now in Round~2, Edges~3, 4,and 5 will perform the \emph{move} function and update their $C_i$ because their $C_i \not\in \mathcal{C}$, as shown in Figure~\ref{fig:case_study_2_algo}. Now, in Round~2, there are two candidates for $\mathcal{C}$, N and F, both with two votes. In this case, each edge device will compute $P(\text{N})$ and $P(\text{F})$ using Eq.~\eqref{eq:Psum}. As $P(\text{F})>P(\text{N})$, F will be set as $\mathcal{C}$. Now in Round~3, Edges~1, 2 and 5 will perform the \emph{move} function and update their $C_i$, i.e., to F, Q and F, respectively. Therefore, $G_{\text{best}}$ will now be set to F (with four votes). After Round~3, we already have $\mathcal{C}$ with a simple majority of all votes, so according to Corollary~\ref{corollary:early_stop} we can change all local bests to F and stop.

\begin{figure}[!ht]
\centering
\begin{subfigure}{\linewidth}
  \centering
  % include first image
  \includegraphics[width=\linewidth, height=5cm]{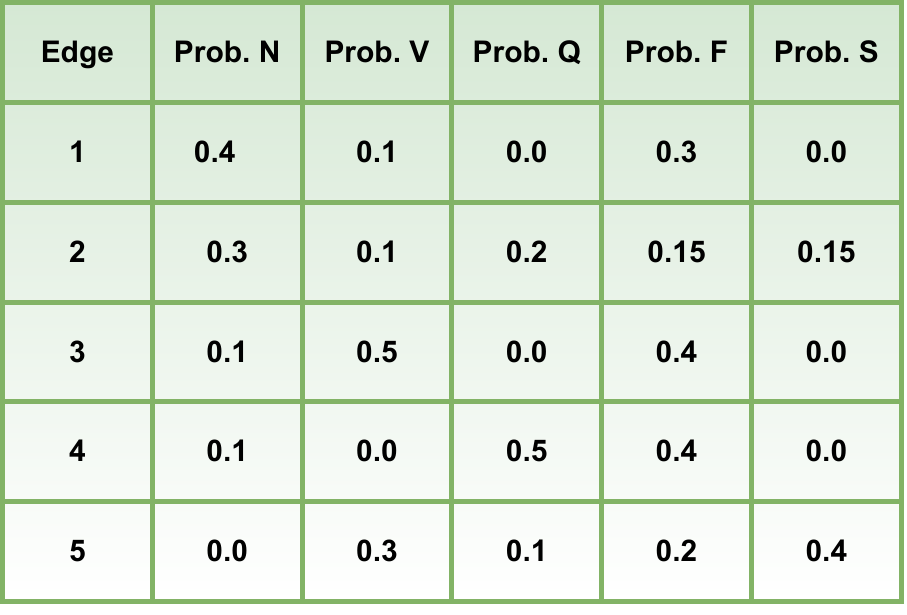}  
  \caption{Input probability distribution}
  \label{fig:case_study_2_PD}
\end{subfigure}
\begin{subfigure}{\linewidth}
  \centering
  % include second image
  \includegraphics[width=\linewidth]{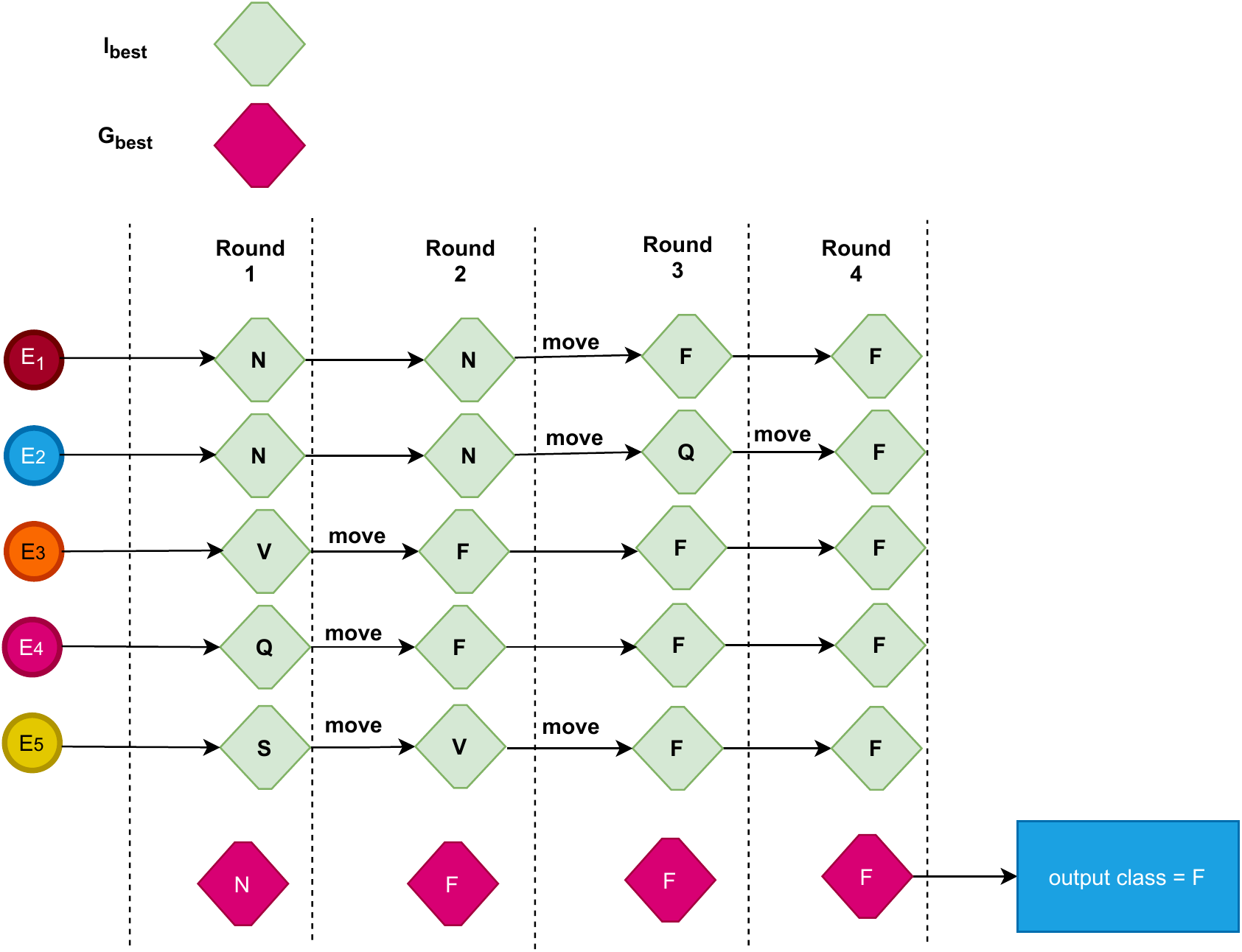}  
  \caption{Proof of Swarm} 
  \label{fig:case_study_2_algo}
\end{subfigure}
\caption{Case Study 2.}
%\label{fig:case_study_2}
\end{figure}
\fi

\section{Experimental Results}
\label{sec:experiments}

In this section, we show some experimental results of a performance analysis of the proposed PoSw method. To evaluate the proposed method, we trained a convolutional neural network-based five-class classifier for ECG classification in a federated setting. We used five edge deceives in the FL setting to collaboratively train a global model. After training the global model, each edge device downloads the global model and fine tunes it for further classification. We tested each locally tuned global model using a test dataset. Then we used the proposed PoSw method to achieve a consensus among the edge devices for the same test dataset. We used the widely known MIT-BIT arrhythmia dataset~\cite{moody2001} to test the proposed algorithm. The training samples were equally independent and identically distributed among each client. We also kept 1,000 samples for testing which were not used by any client. The PoSw method was implemented with TensorFlow 2.9.0 as the machine learning library, our simulations were run on a computer with an Intel core i-6700HQ CPU and 32 GB RAM. Figure~\ref{fig:round} presents the number of rounds taken by 1,000 simulations of the PoSw method to reach a mutual consensus for each input sample. For most simulations, the PoSw algorithm was not needed because all five edge classifiers predicted the same class label. For other cases, the PoSw algorithm was run to obtain the results, mostly within just one or two rounds and in one case after 20 rounds (which is the maximum number of rounds according to Theorem~\ref{theorem:convergence}).

Figure~\ref{fig:time} presents the time taken (in seconds) by 1,000 software-based simulations of the proposed PoSw method to finally achieve a mutual consensus for each input sample. It can be seen that the proposed PoSw method took on average less than a seconds to achieve a mutual consensus among the participating edge devices.

\begin{figure}[!ht]
\begin{tikzpicture}
  \begin{axis}[
      width=\linewidth,
      height=\iffulledition 6cm \else 4.5cm\fi,
      grid=both,
      xmin=-50,
      xmax=1050,
      xtick={1, 250, 500, 750, 1000},
      label style={font=\footnotesize},
      xlabel=Simulation ID,
      ylabel=Time per sample (seconds)
    ]
    \addplot 
    table[x=sample,y=time,col sep=comma] {figures/Data/time.csv}; 
  \end{axis}
\end{tikzpicture}
\caption{Performance of the proposed PoSw consensus method (in terms of the time taken by 1,000 software-based simulations to achieve a mutual consensus).}
\label{fig:time}
\end{figure}
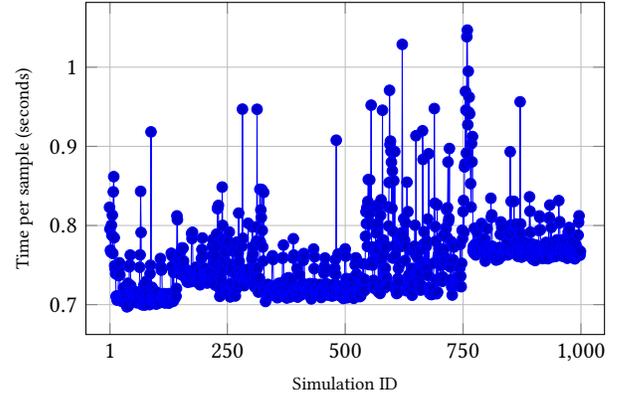

\begin{figure}[!ht]
\begin{tikzpicture}
  \begin{axis}[
      width=\linewidth,
      height=\iffulledition 6cm \else 4.5cm\fi,
      grid=both,
      xmin=-50,
      xmax=1050,
      xtick={1, 250, 500, 750, 1000},
      label style={font=\footnotesize},
      xlabel=Simulation ID,
      ylabel=Number of Rounds
    ]
    \addplot 
    table[x=sample,y=rounds,col sep=comma] {figures/Data/round.csv};
  \end{axis}
\end{tikzpicture}
\caption{Performance of the proposed PoSw consensus method (in terms of the number of rounds needed by each of 1,000 simulations to achieve a mutual consensus).}
\label{fig:round}
\end{figure}
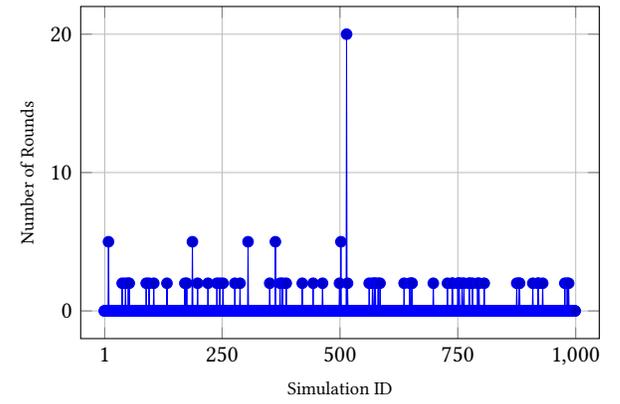

In order to compare the classification performance (in terms of accuracy, defined by $\#(\text{classification errors})/\#(\text{samples})$) of the proposed PoSw-based consensus (ensemble learning) model against the five local models and the FL-based global model, we calculated the accuracy metrics of all the seven models using the same test dataset. Figure~\ref{fig:accuracy} presents the results. It can be observed that the proposed PoSw-based model has the best accuracy, among all models. This indicates that using multiple local models to collectively make a decision can help reduce error rates, even outperforming the FL-based global model.

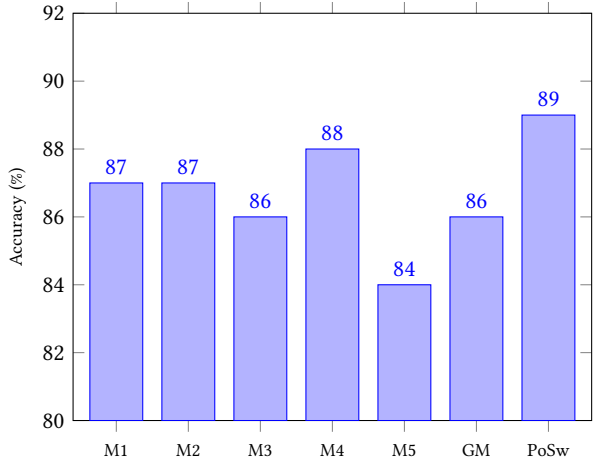
\begin{figure}
\centering
\begin{tikzpicture}
\begin{axis}
[
ybar,
width=\linewidth,
height=\iffulledition 7cm \else 4.5cm\fi,
ymin=80,
ymax=92,
label style={font=\footnotesize},
ylabel={Accuracy (\%)},
bar width=20pt,
symbolic x coords={M1, M2, M3, M4,M5, GM, PoSw},
xticklabel style={font=\footnotesize, text height=1.5ex}, % To make sure the text labels are nicely aligned
xtick=data,
nodes near coords,
nodes near coords align={vertical}]
\addplot coordinates {(M1,87) (M2,87) (M3,86) (M4,88) (M5,84) (GM,86) (PoSw,89)};
\end{axis}
\end{tikzpicture}
\caption{Comparison of the classification accuracy of the PoSw-based consensus (ensemble learning) model with the global model and the five local models.}
\label{fig:accuracy}
\end{figure}

\section{Comparison}

In this section, we compare key features of our proposed PoSw with the most commonly used ensemble learning methods for classification, i.e., bootstrap aggregation or bagging~\cite{breiman1996bagging}, boosting~\cite{gonzalez2020practical}, stacking~\cite{Pavlyshenko2018} and BFT~\cite{castro2002}. Bagging trains a number of models on different samples of the same training dataset. The predictions made by all ensemble member are then combined using a statistical method like (weighted) majority voting. Bagging can partially solve the tie problem by reducing the probability of having a tie if weighted voting is used, and a rule can be set to decide which result to output in case of a tie. Boosting combines several weak models sequentially by assigning weights to outputs of each model. Then it inputs the incorrect result from the first model in sequence to the subsequent model. Similarly, stacking involves training several weak models and then training a meta model using the outputs of the weak learners. In a simple BFT, \iffulledition as mentioned in Section~\ref{subsec:BFT}, \fi a majority voting is used to determine the final output. In addition to ensemble learning methods, many swarm intelligence (SI) \iffulledition methods \else and blockchain-based methods \fi can be used to achieve a mutual consensus among multiple parties, but we are not aware of any SI-based \iffulledition methods \else and blockchain-based methods \fi that can address the tie problem in our application area\footnote{There are other SI-based and blockchain-based methods that can address ties, however, their application is limited in our application area. This is because such methods are not directly applicable to federated learning, where typical features such as a noisy swarm and rankings of clients/edges are lacking in case of SI-based methods and stacking~\cite{hamann2013}, and high computational power in case of blockchain-based methods~\cite{federated2022fedbc}.}.

Table~\ref{tab:comparision} presents the comparison of our proposed method PoSw with the above-mentioned state-of-the-art ensemble learning methods. It can be observed that PoSw has more desirable features by providing a mutual consensus among all the participants unlike bagging and BFT where the result is achieved with a simple (weighted) majority voting. Moreover, in case of FL, boosting is not suitable because of its sequential training nature. Similarly, stacking involves training a meta model using weak learners. In FL, a global model is achieved by aggregation local models, which are then fine-tuned locally. Hence, applying stacking again would make no difference.

\begin{table}[!ht]
\centering
\caption{Comparison of PoSw with selected state-of-the-art ensemble learning and distributed consensus methods}
\begin{tabular}{M{0.3\linewidth}ccc}
\toprule
Method & Mutual Consensus & Tie resolution\\
\midrule
Bagging & No & Partially\\
Boosting & Not Applicable & Not Applicable\\
Stacking & Not Applicable & Not Applicable\\
BFT & No & No\\
Other SI-based methods & Yes & No\\
Other Blockchain-based methods & Yes & No\\
\midrule
PoSw & Yes & Yes\\
\bottomrule
\end{tabular}
\label{tab:comparision}
\end{table}

\iffulledition
\section{Further Discussions}
\label{sec:discussions}

The distributed nature of the consensus provides promising results against various security attacks. For example, in model poisoning attacks, an attacker compromises the local models to alter the performance of the global model. In such cases, since the proposed consensus algorithm effectively considers the outputs of all models, compromising a single model cannot normally alter the final consensus easily. In order to launch a successful attack, the attacker needs to compromise a majority of the edge models, which is expensive and complex. Such collusion attacks are harder to execute in practical applications. Despite the practical difficulties of running collusion attacks, colluding edge clients may have more advanced methods to collaborate and broadcast fake prediction results (class labels predicted) and/or confidence scores to mislead the consensus. This deserves some further investigation.

In addition to collusion attacks, sharing prediction results with confidence scores among the peers could lead to leakage of more information about local training data, therefore a higher level of privacy concerns. This problem could be addressed using mechanisms such as differential privacy~\cite{ji2014differential} and homomorphic encryption~\cite{naehrig2011can}. However, such mechanisms come with trad-offs between run-time performance, privacy and utility. Hence, more studies are needed to investigate how much additional information can be inferred from the outputs from local models and what can be done to mitigate such new privacy concerns.
\fi

\section{Conclusions}
\label{conclusions}

In this article, we proposed a novel distributed consensus algorithm called PoSw (Proof of Swarm) to achieve ensemble learning in federated learning applications. Using the proposed PoSw method, distributed peers can always converge to reach a consensus in $K(K-1)$ steps, where $K$ is the number of classes of the classification problem. Additionally, the proposed PoSw method can efficiently solve tie events. Unlike the classical distributed consensus algorithm, such as Byzantine fault tolerance the proposed algorithm does not makes consensus based on a simple majority voting, instead, it considers confidence scores of predicted class labels of all peer classifiers and tries to achieve a more optimised consensus decision among all the peers in the network. \iffulledition We provide two case studies to show the capability of proposed algorithm to achieve efficient and sub-optimum consensus among peers. \fi Using experimental results of an ECG classification task with five classes, we show that the proposed PoSw-based ensemble learning model outperformed all local models and also the FL-based global model, in terms of the overall accuracy. \iffulledition We also discuss some data security and privacy related issues of the proposed method, which help define future work.\fi

\begin{acks}
This study was supported by the grant (REF LABEX/EQUIPEX), a French State fund managed by the National Research Agency under the frame program ``Investissements d'Avenir'' I-SITE ULNE / ANR-16-IDEX-0004 ULNE.
\end{acks}

\bibliographystyle{ACM-Reference-Format}
\bibliography{main}

\end{document}